\newcommand{\Title}{From Low Intrinsic Dimensionality to Non-Vacuous Generalization Bounds in Deep Multi-Task Learning}
\title{\Title}
\author{%
  Hossein Zakerinia\\ 
  Institute of Science and Technology Austria (ISTA) \\ 
  \texttt{hossein.zakerinia@ista.ac.at} \\
  \And
  Dorsa Ghobadi\\ 
  Sharif University of Technology \\ 
  \texttt{dors.ghob82@sharif.edu} 
  \And
  Christoph H. Lampert \\
  Institute of Science and Technology Austria (ISTA) \\
  \texttt{chl@ista.ac.at} 
}
\DeclareRobustCommand\onedot{\futurelet\@let@token\@onedot}
\def\@onedot{\ifx\@let@token.\else.\null\fi\xspace}
\def\iid{{i.i.d}\onedot}
\def\eg{{e.g}\onedot} 
\def\ie{{i.e}\onedot}
\theoremstyle{plain}
\newtheorem*{hypothesis}{Hypothesis}
\newtheorem{theorem}{Theorem}
\newtheorem{lemma}[theorem]{Lemma}
\newtheorem{corollary}[theorem]{Corollary}
\theoremstyle{definition}
\newtheorem{definition}{Definition}
\theoremstyle{remark}
\newcommand{\myparagraph}[1]{\noindent\textbf{{#1}}}
\newcommand{\Z}{\mathcal{Z}} 
\newcommand{\X}{\mathcal{X}} 
\newcommand{\Y}{\mathcal{Y}} 
\newcommand{\F}{\mathcal{F}}
\newcommand{\kl}{\operatorname{\mathbf{kl}}}
\newcommand{\er}{\mathcal{R}} 
\newcommand{\her}{\widehat{\mathcal{R}}} 
\newcommand{\E}{\operatorname*{\mathbb{E}}}
\newcommand{\R}{\mathbb{R}} 
\begin{document}

\maketitle

\begin{abstract}
Deep learning methods are known to generalize well from training to future data,
even in an overparametrized regime, where they could easily overfit. %
One explanation for this phenomenon is that even when their \emph{ambient dimensionality},
(\ie the number of parameters) is large, the models' \emph{intrinsic dimensionality}
is small; specifically, their learning takes place in a small subspace of all possible 
weight configurations.

In this work, we confirm this phenomenon in the setting of \emph{deep multi-task learning}.
We introduce a method to parametrize multi-task network directly in the low-dimensional
space, facilitated by the use of \emph{random expansions} techniques. 
We then show that high-accuracy multi-task solutions can be found with much smaller intrinsic dimensionality
(fewer free parameters) than what single-task learning requires. 
Subsequently, we show that the low-dimensional representations in combination with \emph{weight compression} and \emph{PAC-Bayesian} reasoning lead to the first \emph{non-vacuous generalization bounds} for deep multi-task networks.
\end{abstract}

\section{Introduction}
One of the reasons that deep learning models have been so successful in recent years is that they can achieve low training error in many challenging learning settings, while often also generalizing well from their training data to future inputs. 
This phenomenon of \emph{generalization despite strong overparametrization} seemingly defies classical results from machine learning theory, which explain generalization by a favorable trade-off between the model class complexity, \ie how many functions they can represent, and the number of available training examples.

Only recently has machine learning theory started to catch up with practical developments. Instead of overly pessimistic generalization bounds based on traditional complexity measures, such as VC dimension~\citep{VC}, Rademacher complexity~\citep{bartlett2002rademacher} or often loose PAC-Bayesian estimates~\citep{mcallester1998some, alquier2021user}, 
non-vacuous bounds for neural networks were derived based on \emph{compact encodings}~\citep{zhou2018nonvacuous, lotfi2022pac}.
At their core lies the insight that---despite their usually highly overparametrized form---the space of functions learned by deep models has a rather low \emph{intrinsic dimensionality} when they are trained on real-world data~\citep{li2018measuring}. 
As a consequence, deep models are highly \emph{compressible}, in the sense that a much smaller number of values (or even bits) suffices to describe the learned function compared to what one would obtain by simply counting the number of their parameters.

In this work, we leverage these insights to tackle another phenomenon currently lacking satisfactory generalization theory: the remarkable ability of deep networks for \emph{multi-task learning}.
In a setting where multiple related tasks are meant to be learned, high-accuracy deep models that generalize well can be trained from even less training data per task than in the standard single-task setting. 
Our contributions are the following:
\begin{itemize}
    \item We introduce \textbf{a new way to 
    parametrize and analyze a class of deep 
    multi-task learning models that allows 
    us to quantify their intrinsic dimensionality.}
    Specifically, high-dimensional models are 
    learned as linear combinations of a small 
    number of basis elements, which themselves 
    are trained in a low-dimensional random 
    subspace of the original network 
    parametrization.
\item We then demonstrate experimentally that \textbf{for such networks a much smaller effective dimension per task can suffice compared to single-task learning.}
Consequently, when learning related tasks, high-accuracy models can be characterized by a very small number of per-task parameters, which suggests an explanation for the strong generalization ability of deep MTL in this setting. 
\item To formalize the latter observation, \textbf{we prove new generalization bounds for deep multi-task and transfer learning based on network compression and PAC-Bayesian reasoning.} 
The bounds depend only on quantities that are available at training time and can therefore be evaluated numerically. 
Doing so, we find that \textbf{the bounds are generally non-vacuous}, so they provide numeric guarantees of the generalization abilities of Deep MTL, not just conceptual ones as in prior work. %
\end{itemize}

\section{Background}
For an input set $\X$ and an output set $\Y$, we 
formalize the concept of a \emph{learning task} as a tuple $t=(p,S,\ell)$, where $p$ is a probability (data) distribution over $\X\times\Y$, $S$ is a dataset of size $m$ that is sampled \iid from $p$, and $\ell:\Y\times\Y\to[0,1]$ is a loss function.
A learning algorithm has access to $S$ and $\ell$, but not $p$, and its goal is to learn a prediction model $f:\X\to\Y$ with as small as possible \emph{risk} on future data, $\mathcal{R}(f)=\E_{(x,y)\sim p}\ell(y,f(x))$.

In a \emph{multi-task setting}, several learning tasks, $t_1,\dots,t_n$, are meant to be solved. 
A \emph{multi-task learning (MTL) algorithm} has access to all training sets, $S_1,\dots, S_n$ and it is meant to output one model per task, $f_1, \dots, f_n$\footnote{In practice, one might enforce that these models share certain components, \eg a token embedding layer or a feature extraction stage. In this work, we do not make any such \emph{a prior} assumptions and let the multi-task learning algorithm decide in what form to share parameters, if any.}.
The expectation is that if the tasks are related to each other in some form, a smaller overall risk $
\er(f_1,\dots,f_n)=\frac{1}{n}\sum_{i=1}^n\er(f_i)$ can be achieved than if each task is learned separately~\citep{Caruana1997MultitaskL, thrun1998, baxter2000model}.  

\subsection{Intrinsic dimensionality of neural networks}
Deep networks for real-world prediction tasks are typically parametrized with many more weights than the available number of training samples. 
Nevertheless, they tend to generalize well from training to future data. Therefore, it has been hypothesized that the overparametrization is mainly required to enable an efficient optimization process, but that the set of models that are actually learned lie on a manifold of low intrinsic dimension~\citep{ansuini2019intrinsic,aghajanyan2020intrinsic,pope2021intrinsic}. 
To quantify this effect, \citet{li2018measuring} introduced a \emph{random subspace} construction for estimating the intrinsic dimensionality needed for learning a task. 
In this formulation, a model's high-dimensional parameter vector, $\theta\in\R^D$, is not trained directly but represented indirectly as 
\begin{equation}
\theta= \theta_0 + P w, \label{eq:representation}
\end{equation}
where $\theta_0$ is a random initialization that helps optimization~\citep{glorot}, and $w\in\R^d$ is a learnable low-dimensional vector, which is expanded to full dimensionality by multiplication with a fixed large matrix, $P\in\R^{D\times d}$, with uniformly sampled random entries. 
The authors introduced a procedure in which $d$ is determined such that a certain accuracy level is achieved, typically $90$\% of an ordinarily-trained full-rank model.
As such, this \emph{effective dimension} is a property of the model as well as the data, and the authors
found it could usually be chosen orders of magnitude smaller than the \emph{ambient dimension} (number of model parameters) $D$. %
Later, \citet{aghajanyan2020intrinsic} made similar observations in the context of model fine-tuning. Lotfi and co-authors used the effective dimension to establish non-vacuous generalization bounds for learning in the setting of single-task prediction~\citep{lotfi2022pac} and generative modeling~\citep{lotfi2024unlocking}. 

\section{Amortized intrinsic dimensionality for deep multi-task learning}\label{sec:main}
Our main claim in this work is that the success of deep multi-task learning can be explained by its ability to effectively express and exploit the relatedness between tasks in a low-dimensional subspace of the parametrization space. 

To quantify this statement, we extend the definition of intrinsic dimensionality from a single-task setting, where no sharing of information between models takes place, to the multi-task setting, where sharing information between models can occur.
Specifically, we introduce the \emph{amortized intrinsic dimension (AID)} that extends the parametrization~\eqref{eq:representation} in a hierarchical way.
Instead of learning in a fully random subspace, we divide the multi-task learning task into two components: learning the subspace (for which data of all tasks can be exploited), and learning per-task models within the subspace, for which only the data of the respective task is relevant. 
Both processes occur simultaneously in an end-to-end fashion, thereby making full use of the high-dimensional model parameter space during the optimization progress. 

Formally, the learnable parameters consist of shared vectors $v_1,\dots,v_k \in \R^l$, and task-specific vectors $\alpha_1,\dots,\alpha_n \in \R^{k}$. %
We use the shared parameters to replace the completely random expansion matrix $P$ of \eqref{eq:representation} by a learned matrix $Q$, which is constructed as 
\begin{equation}
Q = [P_1 v_1, P_2 v_2, \cdots, P_k v_k] \in \R^{D\times k},
\label{eq:mtlrepresentation1}
\end{equation}
where $P_1 \dots P_k\in\R^{D\times l}$ are fixed random matrices. 
Models for the individual tasks are learned within the subspace spanned by $Q$, \ie the parameter vector of the learned model for task $j$ is given by:
\begin{equation}
    \theta_j = \theta_0 + Q \alpha_j,
\label{eq:mtlrepresentation2}
\end{equation}
again with $\theta_0$ denoting a random initialization.

Overall, this formulation uses $l\cdot k$ parameters to learn an \emph{expansion matrix} that can capture information that is shared across all tasks. 
In addition, for each of the $n$ tasks, $k$ additional parameters are used to describe a good model within the shared subspace.
Consequently, the total number of training parameters in such a parametrization is $lk + nk$, \ie 
$\frac{lk}{n}+k$ per task.

The following definition introduces the notions of \emph{intrinsic dimension} and \emph{amortized intrinsic dimension} based on the above construction.

\begin{definition}\label{def:AID}
For a given model architecture and set of tasks, $t_1,\dots,t_n$, assume that individual (single-task) models, $f_i$, are trained on each task. Let $A=\frac1n\sum_{i=1}^n\text{acc}(f_i)$ be the average (validation or test) accuracy that these models achieve on their respective tasks.  
We define 
\begin{itemize}
    \item the \emph{(single-task) intrinsic dimensionality}, $\text{ID}_{p}$, as the smallest value for $d$ in the $d$-dimensional expansion~\eqref{eq:representation}, such that training the corresponding low-rank models individually on each task results in an average across-tasks accuracy of at least $p$\% of $A$. 
\item the \emph{(multi-task) amortized intrinsic dimensionality}, $\text{AID}_{p}$, as the smallest value for $\frac{lk}{n}+k$ in an $(l,k)$-dimensional expansion~\eqref{eq:mtlrepresentation1}/\eqref{eq:mtlrepresentation2}, such that multi-task training the corresponding low-rank models results in an average across-tasks accuracy of at least $p$\% of $A$. 
\end{itemize}
Following~\citet{li2018measuring}, in this work we always use the accuracy factor $p=90$, and we write
$\text{ID}$ and $\text{AID}$ as shorthand notations for $\text{ID}_{90}$ and and $\text{AID}_{90}$, respectively.
\end{definition}

We now formulate our central scientific hypothesis:

\fbox{\begin{parbox}{.98\linewidth}{\begin{hypothesis}
Deep MTL can find models of high accuracy with much smaller \emph{amortized intrinsic dimensionality} than the intrinsic dimensionality required for learning the same tasks separately: 
$$\text{AID}_{90} \ll \text{ID}_{90}$$
\end{hypothesis}}\end{parbox}}

Note that the validity of this hypothesis is not obvious, and its correctness depends on the model architecture as well as set of tasks to be learned.
For example, if the tasks are completely unrelated such that no shared subspace is expressive enough to learn good models for all of them, one likely would need one basis element per tasks ($k=n$), and the basis would need the same dimensionality as for single-task learning ($l=\text{ID}_{90}$). Consequently, $\text{AID}_{90}=\text{ID}_{90}+n$, \ie the multi-task parametrization is not more parameter-efficient than the single-task one.
In the opposite extreme, if all tasks are identical, 
a single $\text{ID}_{90}$-dimensional model suffices to represent all solutions, \ie $k=1$ and $l=\text{ID}_{90}$, such that $\text{AID}_{90} = \frac{\text{ID}_{90}}{n}+1$, \ie the amortized intrinsic dimensionality of multi-task learning shrinks quickly with the number of available tasks.
For real-world settings, where tasks are related but not identical, we expect that $k$ and $l$ will have to grow with the number of tasks, but at a sublinear speed compared to $n$.

\begin{table}[t]
\centering
\caption{Intrinsic dimensions for single-task learning ($\text{ID}_{90}$) and multi-task learning ($\text{AID}_{90}$) at fixed target accuracies $\text{acc}_{90}$ for different datasets and model architectures (with $D$ parameters) 
resulting in the reported $\text{ID}_{90}$ and $\text{AID}_{90}$.}\label{tab:id}
\smallskip
\scalebox{.89}{\begin{tabular}{ ccccccccc}
    \toprule
Dataset & MNIST SP  & MNIST PL & Folktables & Products & \multicolumn{2}{c}{split-CIFAR10} & \multicolumn{2}{c}{split-CIFAR100}\\
model & ConvNet  & ConvNet & MLP & MLP & ConvNet & ViT & ConvNet & ViT \\
 $n$\,/\,$m$ & 30\,/\,2000& 30\,/\,2000& 60\,/\,900& 60\,/\,2000& 100\,/\,453 & 30\,/\,1248 & 100\,/\,450 & 30\,/\,1250\\
$D$ &          21840&         21840&        11810&        13730&         121182&  5526346 &    128832 & 5543716\\
 $\text{acc}_{90}$  & 85\% &87\% & 65\%& 75\%&63\%& 80\%& 40\%& 65\%\\
\midrule
$\text{ID}_{90}$     & 400& 300 & 50& 50& 200& 200 &1500 & 550\\ 
 \midrule
 $\text{AID}_{90}$    & 31.6& 166.6& 10&10 & 12& 26.7 & 36 & 100\\
($l$, $k$) & (65, 10) & (70,  50)& (60, 5)& (60, 5)& (20, 10) & (50, 10) & (80, 20) & (70, 30) \\
\bottomrule
\end{tabular}}
\end{table}

\section{Determining the AID for real-world tasks}\label{sec:realworld}

In this section, we provide evidence for our hypothesis by numerically estimating AID and ID values across different settings. %
We rely on six standard multi-task learning benchmarks:
\emph{MNIST Shuffled Pixels~(SP)}, \emph{MNIST Permuted Labels~(PL)}, \emph{split-CIFAR10} and  \emph{split-CIFAR100} are 
multi-class image datasets, and we report results for ConvNets and Vision Transformers for them.
\emph{Products}, which consists of vectorial embeddings of text data, and \emph{Folktables}, which contains tabular data, are binary classification tasks,
for which we use fully connected architectures.
For more details on the datasets and network architectures, see \Cref{app:experiments}.

\Cref{tab:id} reports the intrinsic dimensions obtained in our experiments. 
It clearly supports our hypothesis:
in all tested cases, the amortized intrinsic dimensionality in the multi-task setting 
is smaller than the per-task intrinsic dimensionality when training tasks individually,
sometimes dramatically so. 
For example, to train single-task models on the 30 MNIST~SP tasks, it suffices 
to work in a 400-dimensional subspace of the models' overall 21840-dimensional 
parameter space. 
For multi-task learning in representation~\eqref{eq:mtlrepresentation2}, it even suffices to learn a basis of dimension $k=10$, where each element is learned within a subspace of dimension $l=65$ of the model parameters. 
Consequently, MTL represents all 30 models by just $10\cdot 65+30\cdot 10 = 950$ parameters,
resulting in an amortized intrinsic dimension of just $950/30 \approx 32$. 
The other datasets show similar trends: for the \emph{Folktables} and \emph{Products} datasets, 
the intrinsic dimension is reduced from 50 to 10 in both cases. %
For \emph{split-CIFAR10} and \emph{split-CIFAR100}, the intrinsic dimensions drop from 200 and 1500 to 12 and 36, respectively. 
The smallest gain we observe is for MNIST~PL, where the amortized intrinsic dimensions are slightly more than half of the intrinsic dimension of single-task learning.

From our observations, one can expect that the more tasks are available, the more drastic the difference between single-task ID and multi-task AID can become. 
Figure~\ref{fig:aid_vs_tasks} visualizes this phenomenon for the MNIST SP dataset.
One can indeed see a clear decrease in the AID with a growing number of available 
tasks, whereas the single-task ID would not be affected by this. 

Note that the provided results in \Cref{tab:id}, show the amortized intrinsic dimensionality for our
model design. Therefore, the reported values can be seen as upper bounds on the minimum possible amortized intrinsic dimension of multi-task learning. Choosing a specific multi-task design based on prior knowledge about the relatedness of the tasks could push the numbers in \Cref{tab:id} even lower. However, even without any such additional information, our reported results clearly confirm our hypothesis by showing a substantial reduction in $\text{AID}_{90}$ compared to $\text{ID}_{90}$.
\begin{figure}[ht]
\centering
    \includegraphics[width=.4\linewidth]{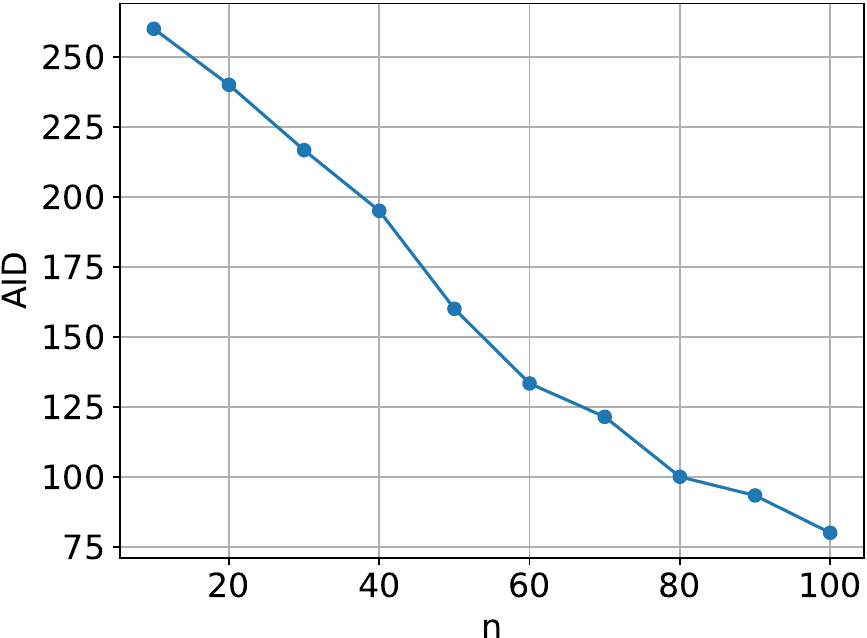} 
    \quad
    \includegraphics[width=.4\linewidth]{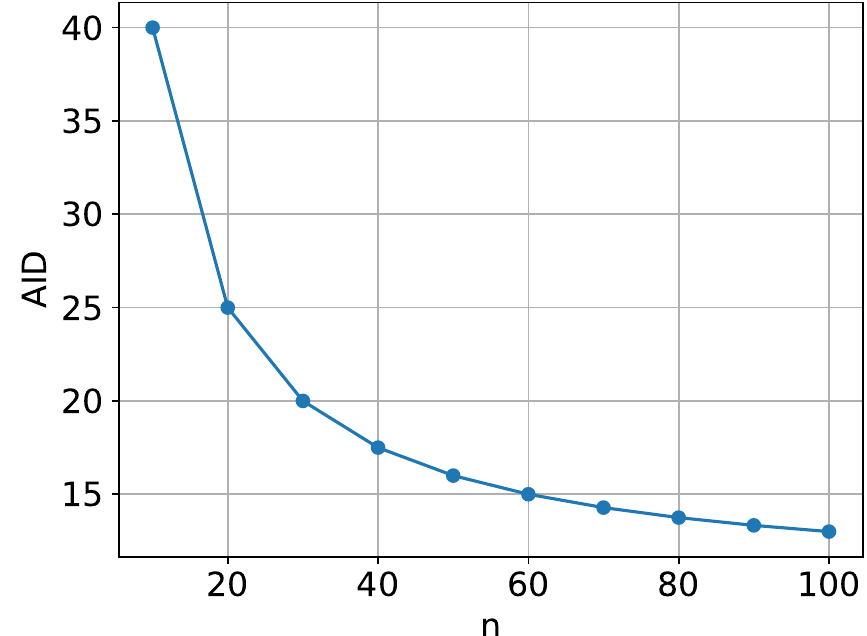} 
    \caption{Amortized intrinsic dimensionality, $\text{AID}_{90}$, for multi-task learning on the MNIST~PL (left) and MNIST~SP (right) datasets ($n=10,20,\dots,100$ tasks, $m=600$ samples per task).
    The more tasks are available, the fewer learnable parameters per task are required. 
    }
   \label{fig:aid_vs_tasks} 
\end{figure}

\section{Non-vacuous generalization guarantees for multi-task learning} \label{sec:theory}
The result in the previous section suggests that using the 
representation~\eqref{eq:mtlrepresentation2}, even quite 
complex models can be represented with only a quite small 
number of values. 
This suggests that the good generalization properties of deep 
multi-task learning might be explainable by generalization bounds that exploit this fact.

Our main theoretical contribution in this work is a demonstration
that this is indeed possible: we derive a new generalization 
bound for multi-task learning that captures the complexity 
of all models by their joint encoding length. 
Technically, we provide high probability upper-bounds for the true tasks $\er(f_1, \dots,f_n)$ based on the training error $\her(f_1, \dots,f_n) = \frac{1}{mn} \sum_{i,j} \ell(y_{i,j},f_i(x_{i,j}))$, and properties of the model.

\subsection{Background and related work}
Before formulating our results, we remind the reader of some classical concepts and results.

\begin{definition}\label{def:encoding}
For any base set, $\Z$, we call a function $E:\Z\to\{0,1\}^*$ a (prefix-free) \emph{encoding}, 
if for all $z,z'\in\Z$ with $z\neq z'$, the string $E(z)$ is a not a prefix of the string $E(z')$.
The function $l_E:\Z\to\mathbb{N}$, given by $l_E(z)=\text{length}(E(z))$ we 
call the \emph{encoding length function} of $E$. 
\end{definition}

Prefix-free codes have a number of desirable properties. In particular, they are uniquely and efficiently decodable. 
Famous examples include variable-length codes, such as Huffman codes~\citep{huffman1952} or Elias codes~\citep{Elias_coding}, but also the naive encoding that represents each value of a vector with floating-point values by a fixed-length binary string. 

Classical results allow deriving generalization guarantees from the encodings of models.

\begin{definition}\label{def:modelencoder}
For any set of models, $\F$, an encoder with a base set $\Z=\F$ is called a \emph{model encoder}. 
\end{definition}

Model encodings can be as simple as storing all model parameters in a fixed length (\eg 32-bit) 
floating point format. For a $d$-parameter model, the corresponding length function would 
simply be $l_E(f)=32d$. 
Alternatively, storing only the non-zero coefficient values as (index, value) leads to a 
length function $l_E(f)=(\lceil \log_2 d\rceil + 32) s$, where $s$ is the number of 
non-zero parameter values.
For highly sparse models, this value can be much smaller than the naive encoding.
Over the years, many more involved schemes have been introduced, including, \eg,
\emph{weight quantization}~\citep{choi2020}, and \emph{entropy} or \emph{arithmetic coding} 
of parameter values~\citep{frantar2024qmoe}.

\begin{theorem}[\citet{shalev2014understanding}, Theorem 7.7]\label{thm:singletask}
Let $E$ be a model encoding scheme with length function $l_E:\bigcup_{n=1}^{\infty} \F^n\to\mathbb{N}$. 
Then, for any $\delta>0$, the following inequality holds with probability at least $1-\delta$ (over the random training data of size $m$): for all $f\in\F$
\begin{align}
\er(f) \leq \her(f) + \sqrt{\frac{l_E(f) \log 2 + \log(\frac{1}{\delta})}{2m}}.\!\!\!\!\label{eq:singletaskbound}
\end{align}
\end{theorem}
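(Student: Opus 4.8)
The plan is to realize this bound as a weighted union bound over the countable set of encodable hypotheses, where the weights are derived directly from the encoding lengths. First I would exploit the prefix-free property of $E$: by Kraft's inequality, the encoding lengths satisfy $\sum_{f\in\F} 2^{-l_E(f)} \le 1$. This allows me to treat $P(f) := 2^{-l_E(f)}$ as a (sub-)probability distribution over $\F$, i.e. a data-independent prior that places more mass on models with shorter descriptions. This prior is the device that lets me distribute the confidence parameter non-uniformly across hypotheses.

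Next, for each \emph{fixed} $f$ I would control the gap between true and empirical risk. Since the loss is bounded in $[0,1]$ and the $m$ samples are i.i.d.\ from $p$, Hoeffding's inequality gives, for any $\epsilon>0$,
$$\Pr\!\big(\er(f) - \her(f) > \epsilon\big) \le e^{-2m\epsilon^2}.$$
The key step is to choose a hypothesis-dependent confidence level, allocating to each $f$ a failure budget of $\delta\cdot P(f) = \delta\,2^{-l_E(f)}$. Setting $e^{-2m\epsilon_f^2} = \delta\,2^{-l_E(f)}$ and solving for $\epsilon_f$ produces exactly the quantity appearing in the statement,
$$\epsilon_f = \sqrt{\frac{l_E(f)\log 2 + \log(1/\delta)}{2m}}.$$

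Finally, I would assemble these per-hypothesis bounds via a union bound. The probability that the deviation exceeds $\epsilon_f$ for at least one $f\in\F$ is at most $\sum_{f\in\F} \delta\,2^{-l_E(f)} \le \delta$, where the final inequality is again Kraft's inequality. Hence, with probability at least $1-\delta$, every $f\in\F$ simultaneously satisfies $\er(f) \le \her(f) + \epsilon_f$, which is precisely the claimed inequality~\eqref{eq:singletaskbound}.

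I expect the only genuinely subtle point to be the justification that $P$ forms a valid (sub-)probability weighting. This is exactly where the prefix-free assumption enters, through Kraft's inequality, and it is what permits the non-uniform allocation of the confidence parameter $\delta$ across a potentially infinite hypothesis class without the total failure probability exceeding $\delta$. The remaining ingredients---Hoeffding's inequality and a countable union bound---are routine.
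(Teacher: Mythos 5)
Your proof is correct and follows essentially the same route the paper takes: the paper cites this single-task statement from \citet{shalev2014understanding} without reproving it, but its own proof of the multi-task generalization (Theorem~\ref{thm:mainbound}) uses exactly your argument --- Hoeffding's inequality for each fixed model, a failure budget of $\delta\,2^{-l_E(f)}$ per hypothesis, and a union bound justified by the Kraft--McMillan inequality for prefix codes. No gaps; your derivation of $\epsilon_f$ matches the stated bound exactly.
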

Theorem~\ref{thm:singletask} states that models can be expected to generalize well, \ie, have a smaller difference between their training risk and expected risk, if their encodings are short. 
Specifically, the dominant term in the complexity term of \eqref{eq:singletaskbound} is $\sqrt{\frac{l_E(f)\log 2 }{2m}}$. Consequently, for the bound to become non-vacuous (\ie the right-hand side to be less than 1), in particular, the encoding length must not be much larger than the number of training examples.

\subsection{Compression-based guarantees for multi-task learning}

We now introduce a generalization of Theorem~\ref{thm:singletask} to the multi-task situation.

First, we define two types of encoders, that allow us to formalize the concept of \emph{shared} versus \emph{task-specific} information in multi-task learning.
\begin{definition}\label{def:modelencodinglength}
For any model set, $\F$, we define two type of encoders:
\begin{itemize}
    \item \emph{meta-encoder}: an encoder whose its base set is a set of global parameters $E\in\mathcal{E}$ representing shared information between tasks. 
    \item \emph{multi-task (model) encoder}: an encoder with base set $\Z=\bigcup_{n=1}^{\infty} \F^n$ (\eg arbitrary length tuples of models) that, given a global parameter $E$, encodes a tuple of models with the length $l_E(f_1, \dots, f_n)$ \ie encodes the task-specific information.
\end{itemize}
\end{definition}

These encoders generalize (single-task) model encoders in the sense of Definition~\ref{def:modelencoder} by allowing more than one model to be encoded at the same time. 
By exploiting redundancies between the models, shorter encoding lengths can be achievable.
For example, if multiple models share certain parts, such as feature extraction layers, those would only have to be encoded once by meta-encoder while multi-task encoder encodes the remaining parts. Additionally, as we will explain in in Section \ref{sec:jointvssumencoding} multi-task encoders can capture unstructured redundancies between task-specific parts to reduce the encoding lengths.

We are now ready to formulate our main results: new generalization bounds for multi-task learning based on the lengths of meta- and multi-task model encoders. 

\begin{restatable}{theorem}{slowrate}\label{thm:mainbound}
Let $\mathcal{E}$ be a set of global parameters, and let $l: \mathcal{E} \rightarrow \mathbb{N}$ be the length function of the meta-encoder. For each $E \in \mathcal{E}$, let $l_E: \bigcup_{n=1}^\infty\F^n \rightarrow \mathbb{N}$ be the length function of the multi-task encoder given $E$. For any $\delta>0$, with probability at least $1-\delta$ over the sampling of the training data for all $E\in\mathcal{E}$ and for all $f_1, ..., f_n\in\F$ the following inequality holds:

\begin{align}
\er(f_1, ..., f_n) &\leq \her(f_1, ..., f_n) + \sqrt{\frac{(l(E) + l_E(f_1, ..., f_n)) \log(2) + \log\frac{1}{\delta}}{2 m n}}.
 \label{eq:multi_bound}
\end{align}
\end{restatable}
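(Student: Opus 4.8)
The plan is to reduce the multi-task bound to the single-task compression bound of \Cref{thm:singletask} by viewing the concatenation of the meta-encoding and the multi-task encoding as one combined prefix-free code, and by treating the pooled data across all $n$ tasks as a single sample of size $mn$. The key structural observation is that the total encoding $E \mapsto l(E)$ followed by $(f_1,\dots,f_n)\mapsto l_E(f_1,\dots,f_n)$ yields, for each choice of global parameter $E$ and model tuple, a single binary string whose length is exactly $l(E)+l_E(f_1,\dots,f_n)$. Because $E$ is decoded first from its own prefix-free code and then used to decode the task-specific part, the concatenation is itself prefix-free. This lets me invoke a union bound over codewords via the Kraft inequality, which is the engine behind \Cref{thm:singletask}.

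First I would fix the loss $\ell(y_{i,j},f_i(x_{i,j}))\in[0,1]$ and observe that the pooled empirical risk $\her(f_1,\dots,f_n)=\frac{1}{mn}\sum_{i,j}\ell(y_{i,j},f_i(x_{i,j}))$ is the average of $mn$ independent bounded random variables, since the $m$ samples within each task are \iid and the $n$ tasks are drawn independently. Its expectation is exactly $\er(f_1,\dots,f_n)=\frac1n\sum_i\er(f_i)$. Next I would apply Hoeffding's inequality to a single fixed tuple $(E,f_1,\dots,f_n)$ to get, for any $\epsilon>0$,
\begin{align}
\Pr\!\left[\er(f_1,\dots,f_n)-\her(f_1,\dots,f_n)\geq \epsilon\right] \leq \exp\!\left(-2mn\,\epsilon^2\right). \label{eq:hoeff}
\end{align}
The main work is then the union bound: I would assign to each codeword of length $L=l(E)+l_E(f_1,\dots,f_n)$ a failure budget $\delta_L = \delta\,2^{-L}$, and choose the corresponding $\epsilon_L$ by setting the right-hand side of \eqref{eq:hoeff} equal to $\delta_L$, which gives $\epsilon_L=\sqrt{\frac{L\log 2+\log(1/\delta)}{2mn}}$, precisely the complexity term in \eqref{eq:multi_bound}.

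To make the union bound go through, I would sum the per-codeword failure probabilities over all $(E,f_1,\dots,f_n)$ and use that the combined code is prefix-free, so Kraft's inequality gives $\sum 2^{-L}\leq 1$, hence $\sum \delta\,2^{-L}\leq \delta$. This establishes that with probability at least $1-\delta$, the bound $\er-\her\leq \epsilon_L$ holds simultaneously for every admissible $E$ and tuple, which is the claimed uniform statement. The hard part will be verifying that the two-stage code is genuinely prefix-free so that Kraft applies to the \emph{combined} lengths $l(E)+l_E(\cdot)$ rather than to each stage separately; this hinges on the fact that the meta-code is prefix-free on $\mathcal{E}$ and, conditioned on each $E$, the multi-task code is prefix-free on $\bigcup_n\F^n$, and that a concatenation of prefix-free codes is prefix-free. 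One subtlety to handle carefully is that $n$ itself is not fixed: since the multi-task encoder's base set is $\bigcup_{n=1}^\infty\F^n$, the code already encodes the arity, so the Kraft sum ranges over tuples of all lengths and the argument is uniform in $n$ as well.
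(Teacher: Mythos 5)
Your proposal is correct and follows essentially the same route as the paper's proof: Hoeffding's inequality applied to the $mn$ pooled independent loss terms, a failure budget of $\delta\,2^{-(l(E)+l_E(f_1,\dots,f_n))}$ per tuple, and a union bound controlled by the Kraft--McMillan inequality. The only cosmetic difference is that the paper bounds the nested sum $\sum_E 2^{-l(E)}\bigl(\sum_F 2^{-l_E(F)}\bigr)\leq 1$ by applying Kraft to each stage separately, which sidesteps the need to argue that the concatenated two-stage code is itself prefix-free.
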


\Cref{thm:mainbound} directly generalizes \Cref{thm:singletask} to the multi-task situation.

In addition, we also state an alternative \emph{fast-rate bound} based on the 
PAC-Bayes framework. 

\begin{restatable}{theorem}{fastrate}
\label{thm:fastrate}
In the settings of \Cref{thm:mainbound}, for any multi-task model encoding, 
any meta-encoding, and any $\delta>0$, the following holds with probability 
at least $1-\delta$ over the sampling of the training data: 
\begin{align}
    \kl\Big(\her(f_1, \dots, f_n) | \er(f_1, \dots, f_n)\Big) &\le \frac{(l(E) + l_E(f_1, \dots, f_n)) \log(2) + \log(\frac{2\sqrt{mn}}{\delta} )}{mn},
    \label{eq:fastratebound}
\end{align}
where $\kl(q|p) = q \log \frac{q}{p} + (1-q) \log \frac{1-q}{1-p}$ is the Kullback-Leibler divergence between Bernoulli distributions with mean $q$ and $p$.
\end{restatable}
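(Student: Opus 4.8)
The plan is to mirror the proof of \Cref{thm:mainbound} but to replace the Hoeffding-type concentration that produces the slow $\sqrt{\cdot/mn}$ rate by the sharper binary-\kl{} concentration inequality of Maurer, which controls $\kl(\her|\er)$ directly and hence yields the fast rate. Conceptually this is nothing but the PAC-Bayes-\kl{} bound (Seeger/Langford/Maurer) instantiated with a point-mass posterior on the learned tuple and a code-induced prior; I would, however, carry out the argument directly so that it stays self-contained. It rests on three ingredients: a Kraft-type weighting induced by the two prefix-free encoders, a per-hypothesis exponential-moment bound, and a single application of Markov's inequality.

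First I would turn the encoders into a prior over hypotheses. For a global parameter $E\in\mathcal{E}$ and a tuple $(f_1,\dots,f_n)$, set $\pi(E,f_1,\dots,f_n)=2^{-(l(E)+l_E(f_1,\dots,f_n))}$. Since the meta-encoder and each multi-task encoder are prefix-free, concatenating a codeword for $E$ with a codeword for the tuple given $E$ is again prefix-free, so Kraft's inequality gives $\sum_{E}\sum_{(f_1,\dots,f_n)}\pi(E,f_1,\dots,f_n)\le 1$. These weights are fixed before the data is seen, which is exactly what is needed for a valid prior.

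The second ingredient is the exponential-moment bound. For a fixed tuple, $\her(f_1,\dots,f_n)$ is an average of the $mn$ independent loss values $\ell(y_{i,j},f_i(x_{i,j}))\in[0,1]$, whose overall mean is $\er(f_1,\dots,f_n)$, and I would invoke Maurer's lemma in the form $\E_S\big[e^{mn\,\kl(\her|\er)}\big]\le 2\sqrt{mn}$. The only non-routine point, and the step I expect to be the main obstacle, is that these $mn$ summands are independent but \emph{not} identically distributed (their means differ across tasks), whereas the lemma is usually stated for i.i.d.\ Bernoulli variables. I would resolve this via the standard Cramér--Chernoff route: for each $\lambda$ the convexity bound $e^{\lambda x}\le 1-x+xe^{\lambda}$ on $[0,1]$ gives $\E[e^{\lambda\ell_{i,j}}]\le 1-\mu_{i,j}+\mu_{i,j}e^{\lambda}$, and since $\mu\mapsto\log(1-\mu+\mu e^{\lambda})$ is concave, Jensen's inequality lets me replace all individual means $\mu_{i,j}$ by their average $\er(f_1,\dots,f_n)$. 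This dominates the joint moment generating function by that of $mn$ i.i.d.\ Bernoulli$(\er)$ variables, after which the usual optimization over $\lambda$ recovers $\E_S[e^{mn\,\kl(\her|\er)}]\le 2\sqrt{mn}$ verbatim.

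Finally I would combine the two via Markov. Setting $\Phi=\sum_{E,(f_i)}\pi(E,f_1,\dots,f_n)\,e^{mn\,\kl(\her|\er)}$, linearity of expectation, the moment bound, and Kraft's inequality give $\E_S[\Phi]\le 2\sqrt{mn}$, so $\Phi\le 2\sqrt{mn}/\delta$ with probability at least $1-\delta$. On that event, bounding the nonnegative sum $\Phi$ below by any single term yields, for every $E$ and every $(f_1,\dots,f_n)$, the inequality $\pi(E,f_1,\dots,f_n)\,e^{mn\,\kl(\her|\er)}\le 2\sqrt{mn}/\delta$; taking logarithms and substituting $-\log_2\pi=l(E)+l_E(f_1,\dots,f_n)$ then gives $mn\,\kl(\her|\er)\le (l(E)+l_E(f_1,\dots,f_n))\log 2+\log(2\sqrt{mn}/\delta)$, which is exactly \eqref{eq:fastratebound} after dividing by $mn$.
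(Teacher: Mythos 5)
Your overall architecture matches the paper's: a Kraft--McMillan weighting $2^{-l(E)-l_E(\cdot)}$ induced by concatenating the two prefix-free codes, a per-tuple exponential-moment bound $\E\big[e^{mn\,\kl(\her|\er)}\big]\le 2\sqrt{mn}$, and Markov's inequality combined with a union over hypotheses. (The paper applies Markov to each tuple and then takes a union bound over the events with weights $w_{E;F}$; you apply a single Markov step to the $\pi$-weighted sum $\Phi$. These are equivalent and both deliver \eqref{eq:fastratebound}.)

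The gap is precisely in the step you flag as the main obstacle, and your proposed resolution does not close it. Bounding $\E[e^{\lambda\ell_{i,j}}]\le 1-\mu_{i,j}+\mu_{i,j}e^{\lambda}$ and averaging the log-moment-generating functions via Jensen establishes only that the \emph{moment generating function} of $\sum_{i,j}\ell_{i,j}$ is dominated, for each fixed $\lambda$, by that of $Y\sim B(mn,\er)$. That is strictly weaker than what is needed: $e^{mn\,\kl(x/mn\,|\,\er)}$ is a convex function of the sum but not the exponential of a linear function, so pointwise-in-$\lambda$ MGF domination does not transfer the bound $\E[e^{mn\,\kl}]\le 2\sqrt{mn}$ from the binomial to your heterogeneous sum. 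Nor does ``the usual optimization over $\lambda$'' rescue this: optimizing $\lambda$ yields only the Chernoff tail bound $\mathbb{P}(\kl(\her|\er)\ge t)\le 2e^{-mnt}$, and integrating that tail gives $\E[e^{s\,\kl(\her|\er)}]\le 1+\tfrac{2s}{mn-s}$, which diverges exactly at $s=mn$; the constant $2\sqrt{mn}$ cannot be recovered from the tail bound and requires Maurer's direct computation on the binomial. The correct tool is domination in the \emph{convex order}: $\E f\big(\sum_{i,j}X_{i,j}\big)\le\E f(Y)$ for every convex $f$, with $Y\sim B(mn,\er)$ --- this is the paper's \Cref{lemma:bionomial} (Berend--Kontorovich, resting on Hoeffding's 1956 comparison of the Poisson-binomial with the binomial), which applied to $f(x)=e^{mn\,\kl(x/mn|\er)}$ and combined with \Cref{lemma:Maurer} yields \Cref{lemma:MGF}. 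Substituting that convex-domination lemma for your MGF/Jensen step repairs the argument; the remainder of your proof is sound.
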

Fast-rate bounds tend to be harder to prove and interpret, but they offer tighter 
results when the empirical risk is small. 
Specifically, the complexity term in \Cref{thm:fastrate} scales like $O(\frac{1}{mn})$ 
when the empirical risk is zero, whereas in \Cref{thm:mainbound} the scaling behaviour
is always $O(\frac{1}{\sqrt{mn}})$.
For a more detailed discussion, see Appendix \ref{app:proofs}.

\paragraph{Discussion.} 
\Cref{thm:mainbound} establishes that the multi-task generalization gap 
(the difference between empirical risk and expected risk) can be controlled by a term 
that expresses how compactly the models can be encoded.
Because the inequality is uniform with respect to $E\in\mathcal{E}$, the global 
parameter can be chosen in a data-dependent way, where tighter guarantees can 
be achieved if the global parameter itself can be compactly represented. 

To get a better intuition of this procedure, we consider two special cases.
First, assume a naive encoding, 
where no global shared parameter is encoded, 
and the multi-task model encoder simply stores a $D$-dimensional parameter vector for each model in fixed-width form.
Then $l(E)=0$ and $l_E(f_1,\dots,f_n)=O(nD)$, and the resulting complexity term is of the 
order $O(\sqrt{\frac{D}{m}})$, as classical VC theory suggests. 

In contrast, assume the setting of Section~\ref{sec:main}, where each task is encoded as a weighted linear combination of $k$ dictionary elements, each of which is computed as the product of a large random matrix with an $l$-dimensional parameter vector. 
Setting $\mathcal{E}$ to be the set of all such bases, one obtains $l(E)=O(lk)$ (for now disregarding potential additional savings by storing coefficients more effectively than with fixed width bits), and $l_E(f_1,\dots,f_n)=O(nk)$.
Consequently, the complexity term in Theorem~\ref{thm:mainbound} become 
$O(\sqrt{\frac{lk/n+k}{m}})$, \ie the ambient dimension $D$ of the previous paragraph
is replaced by exactly the \emph{amortized intrinsic dimension} of \Cref{def:AID}. 

\begin{table}[t]
\centering
\caption{Generalization guarantees (upper bound on test error, \emph{lower is better}) for single-task and multi-task learning. For all tested datasets the multi-task analysis offers better guarantees than the single-task analysis, sometimes by a large margin. The bounds are also always non-vacuous. 
The fast-rate bound (\Cref{thm:fastrate}) offers improved guarantees compared to the more elementary \Cref{thm:mainbound}.
}\label{tab:bound} 

\smallskip\scalebox{.83}{
\begin{tabular}{ ccccccccc  }
    \toprule
Dataset & MNIST SP  & MNIST PL & Folktables & Products & \multicolumn{2}{c}{split-CIFAR10} & \multicolumn{2}{c}{split-CIFAR100} \\
model & ConvNet  & ConvNet & MLP & MLP & ConvNet & ViT & ConvNet & ViT \\
    \midrule
    Single task & \np{0.6117491312}& \np{0.5757100215}& \np{0.5661945176}& \np{0.3318849222}&  \np{0.874066862098082} &\np{0.6595277537324064} & \np{0.9935} & \np{0.9054053342288272} \\ 
    \midrule
MTL  (\Cref{thm:mainbound})   & \np{0.230214984238322}  & \np{0.403684887083333}  & \np{0.3936706623}& \np{0.2216718483}&  \np{0.5293339401}  & \np{0.31888066436847307} &  \np{0.8686} & \np{0.6650633921972758}\\
 MTL (\Cref{thm:fastrate})  & \np{0.19606204070563735} & \np{0.34968030098563696} & \np{0.3877731184293242}& \np{0.20257656539056912}& \np{0.5270862451400051}  & \np{0.2802076704897454}&  \np{0.8296961352847348} & \np{0.6575062513613297}\\
    \bottomrule
\end{tabular}}
\end{table}

\subsection{Computing the bounds numerically}
In order to apply \Cref{thm:mainbound} or  \Cref{thm:fastrate} in the 
situation of \Cref{sec:main}, we use a meta-encoder that encodes the 
coefficient vectors, $v_1,\dots,v_k$ of the basis representation~\eqref{eq:mtlrepresentation1}, and a multi-task model encoder that jointly encodes the collection of per-task coefficients,
$\alpha_1,\dots,\alpha_n$ of the per-task representations~\ref{eq:mtlrepresentation2}. 

The easiest way would be to allow for arbitrary floating point values in 
the weights and store them in fixed precision, \eg 32 bit. 
The resulting lengths functions would be constant: $l(E)=32kl$ and $l_E(f_1,\dots,f_n)=32nk$,
such that the dominant part of the complexity term becomes
$O(\sqrt{\frac{\text{AID}}{m}})$ or
$O(\frac{\text{AID}}{m})$, respectively. 
However, it is known that neural networks using quantized weights of fewer bits per 
parameter can still achieve competitive performance~\citep{han2015compression}. 
Therefore, we employ a quantized representation for the parameters we learn, 
adapting the quantization scheme and learned codebooks of~\citet{zhou2018nonvacuous, lotfi2022pac} to our setting.

Specifically, we use two codebooks, a \emph{global} one with $r_g$
entries, $\mathcal{C}_g=\{c_1, \dots, c_{r_g}\}$ for quantizing 
the shared parameters vectors $v_1, \dots, v_k$, and a \emph{local} 
one with $r_l$ values, $\mathcal{C}_l=\{c_1, \dots, c_{r_l}\}$ for 
quantizing the per-task parameters vectors $\alpha_1, \dots, \alpha_n$.

Now, for any set of models $f_1,\dots,f_n$ with global $v_1, \dots, v_k$, 
and $\alpha_1, \dots, \alpha_n$, the \emph{meta-encoder} first stores 
the $r_g$ codebook entries as 16-bit floating-point values. 
Then, for each entry of the parameter vectors, it computes the index of the codebook 
entry that represents its value and it stores all indices together using arithmetic 
coding~\citep{langdon1984introduction}, which exploits the global statistics of the index values. 
Analogously, the multi-task model encoder first stores the $l$ codebook entries.
It then represents the entries of $\alpha_1, \dots, \alpha_n$ as indices in the code, and encodes them jointly using arithmetic coding. 
Note that while the latter process of encoding the model coefficients does not depend on the shared parameters, its decoding process does, because the actual network weights can only be recovered if also the representation basis, $Q$, is known.

In practice, training networks with weights restricted to a quantized set is generally harder 
than training in ordinary floating-point form. 
Therefore, for our experiments, we follow the procedure from~\citet{lotfi2022pac}. We first 
train all networks in unconstrained form. We then compute codebooks by (one-dimensional) 
$k$-means clustering of the occurring weights, and quantizing each weight to the closest 
cluster center, and then fine-tune the quantized network.

The resulting guarantees then hold for the quantized networks.
For \Cref{thm:mainbound}, we can read off the guarantees on the multi-task 
risk directly from the explicit complexity term and the training error. 
For \Cref{thm:fastrate}, we have to invert the $\kl$-expression,
which is easily possible numerically. See Appendix~\ref{app:proofs} for more details.

\subsubsection{Encoding all tasks together} \label{sec:jointvssumencoding} %
An advantage of our generalization bounds is that they allow us to encode all tasks together. 
Formally, the bound is based on the term $l_E(f_1, \dots, f_n)$ and not $\sum_{i=1}^n l_E(f_i)$. 
We found this property particularly helpful when using arithmetic coding,
as it allows us to achieve a smaller joint encoding length than encoding the task-specific parts separately. 
Specifically, to encode the indices of relevant codebook entries, we need to encode the length of the codebook, 
the empirical fraction of occurrences of the indices, and the arithmetic coding given the fractions. 
For example, when performing multi-task learning on the Folktables dataset, each of the 60 models 
was characterized as a $5$-dimensional vector ($k=5$), resulting in 300 task-specific parameters 
overall.
To encode these 300 parameters, we used a codebook with $r_l=10$, which required $160$ bits for the codebook.
$90$ bits to encode the empirical fractions, and $874$ bits for arithmetic coding. Overall, 
the encoding length was $1124$ bits in total, or %
$\approx 18.7$ 
bits per task. 
In contrast, encoding the same task-specific models separately would result in the sum of encoding equal to $2591$ bits or ${\nprounddigits{1}\np{43.183}}$ bits per task.  %
This is actually more than double compared to the size of the joint encoding.

\subsection{Application to transfer learning}

\begin{table}[t]
\centering
\caption{Generalization guarantees (upper bound on test error, \emph{lower is better}) for transfer learning versus learning from scratch (no transfer) as averages over 10 runs. For all the datasets, transfer learning results in better guarantees than simply learning a classifier from scratch.}
\label{tab:transfer}
\smallskip
\scalebox{0.87}{\begin{tabular}{ccccccccc}
 \toprule
 Dataset & MNIST SP   & MNIST PL & Folktables & Products &  \multicolumn{2}{c}{split-CIFAR10} & \multicolumn{2}{c}{split-CIFAR100} \\ 
 model & ConvNet  & ConvNet & MLP & MLP & ConvNet & ViT & ConvNet & ViT \\
 \midrule 
 No transfer    &  \np{0.6043838074}& \np{0.580732365029844}&  \np{0.5475431831}& \np{0.2609532312}& \np{0.871017645140344}& \np{0.7002102693254928} & \np{0.982081990771722} & \np{0.9105750451499703}\\ 
 MTL + Transfer   &  \np{0.1508131314}&  \np{0.4965669053}& \np{0.4011740128}& \np{0.1608311123}& \np{0.5801917015}& \np{0.28310849499895124} & \np{0.848447186321853} & \np{0.6061600232759645}\\
 \bottomrule 
\end{tabular}}
\end{table}

The multi-task setting of Section~\ref{sec:main} has a straightforward extension to the \emph{transfer learning} setting.
Assume that, after multi-task learning as in previous sections, we are interested in training a model for another related task. Then, a promising approach is to do so in the already-learned representation, given by the learned matrix $Q$. 
In particular, this procedure has a low risk of overfitting,  because $Q$ is low-dimensional and has been constructed without any training data for the new task. 
In terms of formal guarantees, $Q$ is a fixed quantity, so no complexity terms appear for it in a generalization bound.
However, learning purely in this representation would pose the risk of underfitting, because the MTL 
representation might not actually be expressive enough also for new tasks.

Consequently, we suggest a transfer learning method with few parameters that
combines the flexibility of single-task learning with the advantages of a 
learned representation.
Specifically, for any new task, we learn a model in a subspace of $d'=k+k'$ dimensions, 
of which $k$ basis vectors stems from the representation matrix $Q$~\eqref{eq:mtlrepresentation1}, 
and the remaining $k'$ come from a random basis as in~\eqref{eq:representation}, 
$k'$ is a tunable hyperparameter. 
Formally, the new model has the following representation:

\begin{equation}
    \theta_j = \theta_0 + Q \alpha + P w,
\label{eq:transfer_representation}
\end{equation}

in which $Q \in \R^{D \times k}$ is the result of multi-task learning, $P \in \R^{D \times k'}$ is a random matrix, and $\alpha \in \R^k$ and $w \in \R^{k'}$ are learnable parameters for the new task. 
If the previous tasks are related to the current task, we would expect to have a smaller 
dimension $d'$ compared to the case that we do not use $Q$.

\Cref{thm:singletask} yields generalization guarantees for this 
situation, in which only the encoding length of $\alpha$ and $w$ 
enters the complexity term, while $Q$ (like $P$) do not enter 
the bound.
To encode the coefficients $\alpha$ and $w$, we can reuse the codebook of 
the MTL situation, which is now fixed and therefore does not have 
to be encoded or learn (and encode) a new one.

\Cref{tab:transfer} shows the resulting generalization guarantees.
For all datasets, learning a model using the pre-trained representation 
leads to stronger guarantees than single-task training from scratch. 
The bounds are always non-vacuous, despite the fact that only a single 
rather small training set is available in each case.

\section{Related work}\label{sec:relatedwork}
\emph{Multi-task learning}~\citep{baxter1995learning,Caruana1997MultitaskL} 
has been an active area of research for many years, first in classical 
(shallow) machine learning and then also in deep learning, 
see, \eg, ~\citet{yu2024unleashing} for a recent survey.
Besides practical methods that typically concentrate on the question of how 
to share information between related tasks~\citep{kang2011learning,sun2020adashare} 
and how to establish their relatedness~\citep{juba2006compression,daume2013learning,fifty2021efficiently}, 
there has also been interest from early on to theoretically understand the 
generalization properties of MTL. %

A seminal work in this area is~\citet{maurer2006bounds}, where the author studies
the feature learning formulation of MTL in which the systems learn a shared 
representation space and individual per-task classifiers within that representation.
In the case of linear features and linear classifiers, he derives a generalization 
bounds based on Rademacher-complexity.
Subsequently, a number of follow-up works extended and refined the results to 
cover other regularization schemes or complexity measures~\citep{crammer2012shared,pontil2013excess,yousefi2018local,pentina2017unlabeled, du2021fewshot}.
However, none of these results are able to provide non-vacuous bounds for 
overparametrized models, such as deep networks. 

A related branch of research targets the problems of representation or meta-learning.
There, also multiple tasks are available for training, but the actual problem is to 
provide generalization guarantees for future learning tasks~\citep{baxter2000model} %
A number of works in this area rely on PAC-Bayesian generalization bounds~\citep{pentina2014pac, amit2018meta, liu2021pac, guan2022fast, riou2023bayes, friedman2023adaptive, rezazadeh2022unified, rothfuss2022pacoh, ding2021bridging, tian2023can, farid2021generalization, zakerinia2024more}, as we do for our \Cref{thm:fastrate}.
This makes them applicable to arbitrary model classes, including deep networks. 
These results, however, hold only under specific assumptions on the observed tasks, 
typically that these are themselves \iid sample from a task environment. 
The complexity terms in their bounds are either not numerically computable, or 
vacuous by several orders of magnitude for deep networks, because they are computed 
in the ambient dimensions, not the intrinsic ones. %
Another difference to our work is that, even those parts of their bound that relate 
to multi-task generalization are sums of per-task contributions, which prevents 
the kind of synergetic effects that our joint encoding can provide. 

\section{Conclusion}\label{sec:conclusion}
We presented a new parametrization for deep multi-task learning problems that directly allows one to read off its intrinsic dimension.
We showed experimentally that for common benchmark tasks, the 
amortized intrinsic dimension per task can be much smaller 
than when training the same tasks separately. 
We derived two new encoding-based generalization bounds
for multi-task learning, that result in non-vacuous guarantees 
on the multi-task risk for several standard datasets and model architectures.
To our knowledge, this makes them the first non-vacuous bounds 
for deep multi-task learning.

A limitation of our analysis in \Cref{sec:main} is that it relies on a specific representation of the learned models. 
As such, the values we obtain for the amortized intrinsic dimensionality constitute only upper bounds to the actual, \ie minimal, number of necessary degrees of freedom. 
In future work, we plan to explore the option of extending our results also to other popular multi-task parametrization, such as methods based on prototypes, MAML~\citep{finn2017model}, or feature learning~\citep{collobert2008unified}. 
Note that our theoretical results in Section~\ref{sec:theory}
readily apply to such settings, as long as we define 
suitable encoders. 
In this context, as well as in general, it would be interesting to study if our bounds could be improved further by exploiting more advanced techniques for post-training weight quantization~\citep{rastegari2016xnor,frantar2022gptq} or learning models directly in a quantized form~\citep{hubara2018quantized,wang2023bitnet}. 

\section*{Acknowledgements}
This research was supported by the Scientific Service Units (SSU) of ISTA through resources provided by Scientific Computing (SciComp).

\bibliography{ms.bib}
\bibliographystyle{icml2025}

\clearpage
\appendix

\section{Proofs} \label{app:proofs}
\subsection{Comparison of the bounds}
 \Cref{thm:fastrate} which in the PAC-Bayes literature is referred to as a \emph{fast-rate} bound, can be tighter than the bound of the \Cref{thm:mainbound}. 
Specifically, up to a different in the $\log$-term we can obtain the bound of the \Cref{thm:mainbound} by relaxing the bound of the \Cref{thm:fastrate}. 
Based on Pinsker's inequality, we have $2(p-q)^2 \le \kl(q|p)$ and therefore
\begin{equation}
    2 \Big(\her(f_1, \dots, f_n) - \er(f_1, \dots, f_n)\Big) ^2 \le  \kl(\her(f_1, \dots, f_n)|\er(f_1, \dots, f_n)) \label{pinsker}
\end{equation}

Combining \Cref{thm:fastrate} and equation~\eqref{pinsker} gives that for every $\delta>0$ with probability at least $1-\delta$, we have:
\begin{align}
&\er(f_1, ..., f_n) \leq \her(f_1, ..., f_n) 
+\sqrt{\frac{(l(E) + l_E(f_1, ..., f_n)) \log(2) + \log\frac{2\sqrt{mn}}{\delta}}{2 m n}}.
\end{align}
This is very similar to the bound of \Cref{thm:mainbound}, differing by the additional term $\frac{\log 2\sqrt{mn}}{2 m n}$, which is negligible for large $m$ or $n$.

Alternatively, instead of using Pinsker's inequality, we can numerically find a better upper-bound for $\er(f_1, \dots, f_n)$. Following \citet{seeger2002pac, alquier2021user}, we define 
\begin{equation}
    kl^{-1}(q|b) = \text{sup} \{p\in[0,1] : \kl(q|p) \le b\}.
\end{equation}

\begin{corollary}
With the same setting as \Cref{thm:fastrate}, with probability at least $1-\delta$, we have:
\begin{align}
    \er(f_1, \dots, f_n) \le kl^{-1} \Bigg( \her(f_1, \dots, f_n)) \Bigg| \frac{(l(E) + l_E(f_1, \dots, f_n)) \log(2) + \log(\frac{2\sqrt{mn}}{\delta} )}{mn} \Bigg) \label{eq:12}
\end{align}
\end{corollary}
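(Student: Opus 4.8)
The plan is to derive this corollary as an immediate reformulation of \Cref{thm:fastrate}, using only the definition of $kl^{-1}$ together with the elementary fact that every element of a set lies below the supremum of that set. I would begin by introducing the shorthand $q=\her(f_1,\dots,f_n)$, $p=\er(f_1,\dots,f_n)$, and $b=\frac{(l(E)+l_E(f_1,\dots,f_n))\log(2)+\log(\frac{2\sqrt{mn}}{\delta})}{mn}$ for the right-hand side of \eqref{eq:fastratebound}, so that \Cref{thm:fastrate} reads: with probability at least $1-\delta$ over the training data, the inequality $\kl(q | p)\le b$ holds simultaneously for all $E\in\mathcal{E}$ and all $f_1,\dots,f_n\in\F$.

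Next I would condition on this single high-probability event. Fixing any admissible $E$ and any $f_1,\dots,f_n$, the true risk $p$ is an average of losses valued in $[0,1]$ and hence $p\in[0,1]$, while the event guarantees $\kl(q | p)\le b$. Together these two facts say precisely that $p$ is a member of the set $\{p'\in[0,1] : \kl(q | p')\le b\}$ whose supremum defines $kl^{-1}(q | b)$. Since any element of a set is at most its supremum, I would conclude $p\le kl^{-1}(q | b)$, which is exactly \eqref{eq:12}. Because the entire argument takes place on the one event of probability at least $1-\delta$ supplied by \Cref{thm:fastrate}, the resulting bound is uniform over all $E$ and all model tuples, with no additional union bound required.

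There is essentially no analytic obstacle here: the content of the corollary is carried entirely by \Cref{thm:fastrate}, and the passage from the $\kl$ inequality to its inverted form is a tautology once $kl^{-1}$ is defined as a supremum. The only points meriting care are (i) checking that the set in the definition of $kl^{-1}$ is nonempty and bounded so the supremum is well defined---it contains $q$ since $\kl(q | q)=0\le b$, and it is contained in $[0,1]$; and (ii) noting that this inverted bound is never looser than the Pinsker-relaxed bound of \eqref{pinsker}, since $kl^{-1}(q | b)$ is by construction the smallest upper bound on $p$ consistent with $\kl(q | p)\le b$. In practice the supremum is located by a simple numerical root-search on the branch $p\ge q$, where $p\mapsto\kl(q | p)$ is monotone increasing and hence amenable to bisection, justifying the remark in the main text that the bound is easily computed numerically.
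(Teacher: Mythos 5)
Your argument is correct and matches the paper's (implicit) reasoning exactly: the corollary is stated as an immediate consequence of \Cref{thm:fastrate} and the definition of $kl^{-1}$ as a supremum, which is precisely the membership-implies-below-supremum step you spell out, carried out on the single high-probability event so that uniformity over $E$ and $f_1,\dots,f_n$ is inherited with no further union bound. Your added remarks on well-definedness of the supremum and on the numerical inversion via monotonicity of $p\mapsto\kl(q|p)$ on $[q,1]$ are consistent with the paper's discussion following the corollary.
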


Note that for a fixed $q \in (0, 1)$, the function $\kl(q|p)$ is minimized for $p=q$, and it is a convex increasing function 
in $p$, when $q \le p \le 1$. 
Therefore, we can find an upper bound for $kl^{-1}(q|b)$, by binary search in the range $[q, 1]$, or Newton's method as in~\citep{dziugaite2017computing}.

As \Cref{tab:bound} shows, the numeric bounds obtained this way can be tighter bound than the upper bound from the  \Cref{thm:mainbound}.

\subsection{Proof of Theorem \ref{thm:mainbound}}
\slowrate*

\begin{proof}
We rely on standard arguments for coding-based generalization bounds, which we adapt 
to the setting of multiple tasks with potentially different data distributions.

For any $i=1,\dots,n$, let $S_i=\{z_{i,1},\dots,z_{i,m}\}\subset\Z$ be the training 
data available for a task $t_i$ and let $\ell_i:\F\times\Z$ be its loss function.
For any tuple of models, $F=(f_1, \dots, f_n)$, we define random variables $X_{i,j} = \frac{1}{m n}\ell_i(f_i, z_{i,j})\in [0,\frac{1}{m n}]$, where $n$ is the number of tasks and $m$ is the number of samples per task, such that
\begin{align}
\her(F) := \her(f_1, \dots, f_n) &= \sum_{(i,j)\in I} X_{i,j},
\intertext{for $I=\{(i,j):\; i\in\{1,\dots,n\} \wedge j\in\{1,\dots,{m}\} \}$, and}
\er(F) := \er(f_1, \dots, f_n) &= \E \big[\sum_{(i,j)} X_{i,j} \big].
\end{align}
Therefore, based on Hoeffding's inequality over these random variables, we have for any $t\geq 0$:
\begin{align}
    \mathbb{P}\Big\{\er(F) \ge \her(F) + t \Big\} \le e^{-2t^2 m n}  
    \label{eq:hoeffdings}
\end{align}

Now, assume a fixed $\delta>0$. For any meta-encoder, $E$, and any tuple of models, 
$F=(f_1, ..., f_n)$, we define a weight $w_{E;F} = \delta\cdot 2^{-l(E)}2^{-l_E(f_1,\dots,f_n)}$, 
where $l(\cdot)$ is the length function of the meta-encoder and $l_E(\cdot)$ is the 
length function of the encoder $E$. 
We instantiate~\eqref{eq:hoeffdings} with a value $t_{E;F}$ such that 
$e^{-t_{E;F}^2 m n}  = w_{E;F}$, \ie $t_{E;F} = \sqrt{-\frac{\log w_{E;F}}{2 m n}}$.
Taking a union bound over all tuples $(E,F)$ and observing that 
$\sum_{E;F} w_{E;F}=\delta\cdot\sum_{E} 2^{-l(E)}\big(\sum_{F} 2^{-l_E(F)}\big)\leq \delta$,
because of the Kraft-McMillan inequality for prefix codes~\citep{kraft1949device,mcmillan1956},
we obtain 
\begin{align}
    \mathbb{P}\Bigg\{\exists E,F:\; \er(F) - \her(F)
    &\ge \sqrt{\frac{\Big(l(E)+l_E(F)\Big)\log(2)+\log\frac{1}{\delta}}{2 m n}}\Bigg\} \leq \delta.  \label{eq:unionbound}
\end{align}
By rearranging the terms, we obtain the claim of \Cref{thm:mainbound}.
\end{proof}

\subsection{Proof of Theorem \ref{thm:fastrate}.}
In this section, we provide the proof for \Cref{thm:fastrate}. Since our hypothesis set is discrete, we use a union-bound approach, similar to the proof of \Cref{thm:mainbound}. A similar result can be proved using the PAC-Bayes framework and change of measure, similar to the fast-rate bounds in \citet{guan2022fast},  but we found our version to be conceptually simpler.

\fastrate*

\begin{proof}
For any fixed tuple of models, $F=(f_1, \dots, f_n)$, we have
\begin{align}
    \mathbb{P}(\kl(\her(F)|\er(F)) \ge t) &= \mathbb{P}(e^{mn \kl(\her(f)|\er(f))} \ge e^{mnt}) 
    \\& \le \frac{\E[e^{mn \kl(\her(f)|\er(f))}]}{e^{mnt}} \le \frac{2\sqrt{mn}}{e^{mnt}} \label{eq:14} \qquad \text{(\Cref{lemma:MGF} below)}
\end{align}
    
Subsequently, we follow the steps of the proof of \Cref{thm:mainbound}.
Assume a fixed $\delta>0$. For any meta-encoder, $E$, and any tuple of models, 
$F=(f_1, ..., f_n)$, we define a weight $w_{E;F} = \delta\cdot 2^{-l(E)}2^{-l_E(f_1,\dots,f_n)}$, 
where $l(\cdot)$ is the length function of the meta-encoder and $l_E(\cdot)$ is the 
length function of the encoder $E$. 
We instantiate~\eqref{eq:14} with a value $t_{E;F}$ such that 
$2\sqrt{mn} e^{-t_{E;F} m n}  = w_{E;F}$, \ie $t_{E;F} = -\frac{\log w_{E;F}}{m n}$.
Therefore, we have with probability at least $1 - w_{E,F}$:

\begin{align}
    &\kl(\her(f)|\er(f)) \le \frac{l(E) + l_E(f_1, \dots, f_n)) \log(2) + \log \frac{2\sqrt{mn}}{\delta}}{mn}
\end{align}

Taking a union bound over all tuples $(E,F)$ and observing that 
$\sum_{E;F} w_{E;F}=\delta\cdot\sum_{E} 2^{-l(E)}\big(\sum_{F} 2^{-l_E(F)}\big)\leq \delta$,
because of the Kraft-McMillan inequality for prefix codes~\citep{kraft1949device,mcmillan1956},
we obtain 
\begin{align}
    \mathbb{P}\Bigg\{\forall E,f_1,\dots,f_n:\; \kl(\her(f)|\er(f)) &\le \frac{l(E) + l_E(f_1, \dots, f_n)) \log(2) + \log \frac{2\sqrt{mn}}{\delta}}{mn}\Bigg\} \leq \delta.  
\end{align}
\end{proof}

\paragraph{Comparison to the fast-rate bounds of \citet{guan2022fast}:}
As mentioned earlier, \citet{guan2022fast} proved a fast-rate bound for meta-learning which consists of a multi-task bound and a meta bound. The multi-task bound provided here, shares a similar structure with the bound of~\citet{guan2022fast}, and a similar approach to upper-bound the MGF (Moment generating function). There are three main differences in how to use the bounds. The first one is that they approximate the upper-bound for the $\er(F)$ based on a closed-form approximation, which has poorer performance compared to the numerical optimization, and even in the cases where the empirical error is not small, their approximation can be even worse than the bound of the \Cref{thm:mainbound}). The second difference is that they use Gaussian distributions on the networks that scale with ambient dimensionality, making the bounds vacuous by several orders of magnitude. The more structural difference is that the sum of the task-specific complexity terms appears in their bound. On the other hand, we have a joint complexity term for the task-specific part (which in our case is the length of the multi-task encoding), which as explained in the section \ref{sec:jointvssumencoding}, is much smaller than the sum of individual ones.

\subsection{Auxiliary lemmas}

\begin{lemma}\citep{Berend2010Efficient}[Proposition 3.2]\label{lemma:bionomial}
    Let $X_i, \, 1 \leq i \leq t$, be a sequence of independent random variables for which $P(0 \leq X_i \leq 1) = 1$, $X = \sum_{i=1}^{t} X_i$, and $\mu = E(X)$.
     Let $Y$ be the binomial random variable with distribution $Y \sim B\left(t, \frac{\mu}{t} \right)$. Then for any convex function \(f\) we have:
    \begin{align}
            \mathbf{E} f (X) \leq \mathbf{E} f(Y).
    \end{align}
\end{lemma}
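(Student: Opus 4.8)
The plan is to establish the convex-order domination $\E f(X) \le \E f(Y)$ in two stages, each preserving the total mean $\mu = \sum_{i=1}^t p_i$ where $p_i := \E X_i$. First I would replace each bounded summand by a Bernoulli variable of the same mean, and second I would equalize the resulting Bernoulli parameters to $\mu/t$, at which point the sum is exactly the binomial $Y \sim B(t,\mu/t)$.

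For the first stage, the pointwise fact I would use is that for any $x\in[0,1]$ and any $r\in\R$, convexity gives $f(r+x)\le (1-x)f(r)+x\,f(r+1)$, since $x=(1-x)\cdot 0+x\cdot 1$. Conditioning on all summands except the $i$-th and taking $\E$ over $X_i$ yields $\E[f(r+X_i)]\le (1-p_i)f(r)+p_i f(r+1)=\E[f(r+B_i)]$ for an independent $B_i\sim\mathrm{Ber}(p_i)$; averaging over the remaining (frozen) coordinates gives $\E f(\cdots+X_i+\cdots)\le \E f(\cdots+B_i+\cdots)$. Performing this swap for $i=1,\dots,t$ in turn, where independence keeps the already-swapped coordinates Bernoulli, produces $\E f(X)\le \E f\big(\sum_{i=1}^t B_i\big)$, a sum of independent Bernoullis with means $p_1,\dots,p_t$ summing to $\mu$.

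For the second stage I would show that, over all parameter vectors with $\sum_i p_i=\mu$, the equal choice $p_i\equiv\mu/t$ maximizes $\E f$. The engine is a pairwise smoothing transfer: freezing all but two coordinates $i,j$ and writing $R$ for the frozen remainder, replace $(p_i,p_j)$ by $(\bar p,\bar p)$ with $\bar p=(p_i+p_j)/2$. Both $B_i+B_j$ and its smoothed version are supported on $\{0,1,2\}$ with the common mean $p_i+p_j$, and a one-line computation shows the smoothed variance exceeds the original by exactly $(p_i-p_j)^2/2\ge 0$. For any distribution on $\{0,1,2\}$ with a fixed mean, parametrized by its mass $c$ at the point $2$, one has $\tfrac{d}{dc}\E f = f(0)-2f(1)+f(2)\ge 0$ by convexity while the variance is likewise increasing in $c$; hence larger variance forces larger $\E f$. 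Applying this with $g(s)=f(s+R)$ and averaging over $R$ shows the transfer toward equality never decreases $\E f$.

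Finally I would conclude by combining the two stages: the map $\phi(p)=\E f\big(\sum_i\mathrm{Ber}(p_i)\big)$ is maximized over the compact simplex $\{p\in[0,1]^t:\sum_i p_i=\mu\}$ at the equal vector $(\mu/t,\dots,\mu/t)$, giving $\E f\big(\sum_i B_i\big)\le \E f(Y)$ and hence $\E f(X)\le \E f(Y)$. The main obstacle I anticipate is the rigor of this last step, since the pairwise averaging does not terminate in finitely many moves: I would either phrase it through the majorization framework—$\phi$ is Schur-concave by the Schur--Ostrowski criterion and the equal vector is majorized by every vector of the same coordinate sum—or argue by compactness that the supremum is attained and that any maximizer with $p_i\neq p_j$ could be weakly improved by a transfer, so a transfer-invariant (hence equal) vector attains the maximum.
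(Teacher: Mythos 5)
The paper does not prove this lemma at all: it is imported verbatim as Proposition~3.2 of \citet{Berend2010Efficient}, so there is no in-paper argument to compare against. Your proposal is a correct self-contained reconstruction of the classical two-step proof. Stage one (replacing each $X_i\in[0,1]$ by a Bernoulli of the same mean via $f(r+x)\le(1-x)f(r)+xf(r+1)$ and a one-coordinate-at-a-time swap) is exactly the standard reduction, and stage two is Hoeffding's 1956 theorem that the Poisson--binomial is dominated in convex order by the binomial of equal mean. Your pairwise computations check out: $\bar p^2-p_ip_j=(p_i-p_j)^2/4$, the variance gain is $(p_i-p_j)^2/2$, and on $\{0,1,2\}$ with fixed mean the derivative of $\E f$ in the mass $c$ at $2$ is $f(0)-2f(1)+f(2)\ge 0$. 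The only soft spot is the one you flag: $\lambda=\tfrac12$ averaging does not reach the equal vector in finitely many moves, and ``a maximizer can be weakly improved'' does not by itself locate a transfer-invariant maximizer. Of your two remedies, the majorization route is the cleaner one, and it is worth noting that your own product computation already gives the general T-transform inequality for free: at fixed sum $s=p_i+p_j$ one has $p_ip_j=(s^2-(p_i-p_j)^2)/4$, so \emph{any} gap-shrinking transfer (arbitrary $\lambda$, not just $\tfrac12$) increases the mass at $2$ and hence $\E f$; by Hardy--Littlewood--P\'olya the equal vector is then reachable from any $p$ with the same sum in at most $t-1$ such transfers, which closes the argument in finitely many steps. (The Schur--Ostrowski route is also fine, since $\phi(p)=\sum_{S}\prod_{i\in S}p_i\prod_{i\notin S}(1-p_i)\,f(|S|)$ is a polynomial and hence smooth.) What your approach buys over the paper's is a fully elementary, citation-free proof; what it costs is about a page of bookkeeping for a result the paper can reasonably treat as known.
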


\begin{lemma}\citep{maurer2004note}[Theorem 1]\label{lemma:Maurer}
Let $Y$ be the binomial random variable with distribution $Y \sim B\left(t, \frac{\mu}{t} \right)$. Then we have:
\begin{align}
    \E[e^{t \; \kl(\frac{Y}{t}|\frac{\mu}{t})}] \le 2\sqrt{t}
\end{align}  
where $\kl(q|p) = q \log \frac{q}{p} + (1-q) \log \frac{1-q}{1-p}$ is the Kullback-Leibler divergence between Bernoulli distributions with mean $q$ and $p$.
\end{lemma}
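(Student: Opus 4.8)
The plan is to reduce the exponential-moment bound to an explicit, parameter-free combinatorial sum and then control that sum by Stirling's approximation together with an integral comparison. Writing $p=\mu/t$ so that $Y\sim B(t,p)$, I would first expand the expectation over the binomial pmf,
\begin{align}
\E\big[e^{t\,\kl(Y/t|p)}\big] = \sum_{k=0}^{t}\binom{t}{k}p^k(1-p)^{t-k}\,e^{t\,\kl(k/t|p)}.
\end{align}
Using $t\,\kl(k/t|p) = k\log\frac{k}{tp}+(t-k)\log\frac{t-k}{t(1-p)}$ (with the convention $0\log 0=0$), the factor $e^{t\,\kl(k/t|p)}$ equals $\frac{k^k(t-k)^{t-k}}{t^t\,p^k(1-p)^{t-k}}$, so the $p^k(1-p)^{t-k}$ cancels against the pmf. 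The key observation is that this leaves
\begin{align}
\E\big[e^{t\,\kl(Y/t|p)}\big] = \sum_{k=0}^{t}\binom{t}{k}\frac{k^k(t-k)^{t-k}}{t^t},
\end{align}
which is independent of $p$ (hence of $\mu$). The whole problem is now a deterministic inequality about this sum.

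Next I would separate the two boundary terms and bound the interior. With $0^0=1$, the terms $k=0$ and $k=t$ each equal $1$, contributing exactly $2$. For $1\le k\le t-1$, Robbins' form of Stirling's approximation gives
\begin{align}
\binom{t}{k}\frac{k^k(t-k)^{t-k}}{t^t} \le \frac{e^{1/(12t)}}{\sqrt{2\pi}}\sqrt{\frac{t}{k(t-k)}},
\end{align}
where the $t^t/(k^k(t-k)^{t-k})$ factors cancel exactly against the term and only the polynomial Stirling prefactors survive. Summing over $k$ and factoring out $\sqrt t$, it remains to bound $\sum_{k=1}^{t-1}\frac{1}{\sqrt{k(t-k)}}$. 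Since $x\mapsto \frac{1}{\sqrt{x(t-x)}}$ is convex on $(0,t)$, the midpoint rule gives $\frac{1}{\sqrt{k(t-k)}}\le\int_{k-1/2}^{k+1/2}\frac{dx}{\sqrt{x(t-x)}}$, so the sum is at most $\int_0^t\frac{dx}{\sqrt{x(t-x)}}=\pi$ (the substitution $x=t\sin^2\theta$ evaluates the integral).

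Combining the pieces yields
\begin{align}
\E\big[e^{t\,\kl(Y/t|p)}\big] \le 2 + \sqrt{\tfrac{\pi}{2}}\,e^{1/(12t)}\sqrt{t},
\end{align}
whose dominant term has the correct $\sqrt t$ scaling with constant $\sqrt{\pi/2}\approx 1.25$ comfortably below $2$. The main obstacle is purely with the constants at small $t$: the additive $2$ from the boundary is only absorbed into $2\sqrt t$ once $t$ is large enough (roughly $t\gtrsim 8$), so I expect to tighten the interior estimate slightly—e.g. by peeling off the $k=1,t-1$ terms, which equal $(1-1/t)^{t-1}\le 1$ and improve the constant—and to verify the remaining finitely many small values of $t$ by direct evaluation of the exact sum, which stays well under $2\sqrt t$ (for instance the sum equals $2$ at $t=1$, matching $2\sqrt1$). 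Everything else is routine.
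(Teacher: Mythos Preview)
The paper does not prove this lemma at all: it is stated as a citation of \citep{maurer2004note}, Theorem~1, and used as a black box in the proof of Lemma~\ref{lemma:MGF}. So there is no ``paper's own proof'' to compare against.

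Your argument is correct and is in fact Maurer's original proof. The cancellation of the $p$-dependence to obtain $\sum_{k}\binom{t}{k}k^k(t-k)^{t-k}/t^t$ is the crux, and your treatment of the interior terms via Robbins' Stirling bound plus the convexity/integral comparison $\sum_{k=1}^{t-1}\frac{1}{\sqrt{k(t-k)}}\le\int_0^t\frac{dx}{\sqrt{x(t-x)}}=\pi$ is exactly how Maurer closes the estimate. Your honesty about the constants is appropriate: the bound $2+\sqrt{\pi/2}\,e^{1/(12t)}\sqrt t$ is only $\le 2\sqrt t$ for $t\ge 8$ or so, and Maurer likewise handles the small-$t$ cases by direct inspection of the exact sum (which is monotone in $t$ and already equals $2=2\sqrt 1$ at $t=1$). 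The peeling of $k=1,t-1$ you suggest is a reasonable refinement but is not needed once the finitely many small $t$ are checked.
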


\begin{lemma}\label{lemma:MGF}
Given the $n$ datasets $S_1, \dots, S_n$ of size $m$. For fixed models, $F = (f_1, \dots, f_n)$, we have 
    \begin{align}
        \E[e^{mn \; \kl(\her(F)|\er(F))}] \le 2\sqrt{mn}
    \end{align}
\end{lemma}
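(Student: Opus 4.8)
\textbf{Proof plan for Lemma~\ref{lemma:MGF}.}
The plan is to reduce the moment generating function of the multi-task empirical risk to the binomial case, so that the two auxiliary lemmas can be applied directly. The key observation is that $\her(F)$ is an average of $mn$ loss values, each lying in $[0,1]$ and each independent across the $(i,j)$ pairs (the samples are \iid within each task, and the tasks' datasets are independent). Concretely, I would set $t = mn$ and define random variables $U_{i,j} = \ell_i(f_i, z_{i,j}) \in [0,1]$, so that $\her(F) = \frac{1}{mn}\sum_{(i,j)} U_{i,j}$ and $\er(F) = \E[\her(F)]$. Writing $X = \sum_{(i,j)} U_{i,j}$ and $\mu = \E[X] = mn\,\er(F)$, the quantity of interest becomes $\E[e^{mn\,\kl(\frac{X}{mn}\,|\,\frac{\mu}{mn})}]$.

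First I would note that the map $x \mapsto e^{mn\,\kl(\frac{x}{mn}\,|\,\frac{\mu}{mn})}$ is a convex function of $x$, since $\kl(q|p)$ is convex in its first argument $q$ and the exponential is convex and increasing. This convexity is exactly the hypothesis needed to invoke \Cref{lemma:bionomial}: with the independent $[0,1]$-valued summands $U_{i,j}$ and their total $X$ of mean $\mu$, letting $Y \sim B(mn, \frac{\mu}{mn})$ be the matching binomial, we get
\begin{equation}
\E\!\left[e^{mn\,\kl(\frac{X}{mn}|\frac{\mu}{mn})}\right] \le \E\!\left[e^{mn\,\kl(\frac{Y}{mn}|\frac{\mu}{mn})}\right].
\end{equation}
Then I would apply \Cref{lemma:Maurer} with $t = mn$ to bound the right-hand side by $2\sqrt{mn}$, which yields the claim after identifying $\frac{X}{mn} = \her(F)$ and $\frac{\mu}{mn} = \er(F)$.

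The only genuine care point is verifying the convexity of $x \mapsto e^{mn\,\kl(x/mn\,|\,\mu/mn)}$ rigorously enough to legitimately apply \Cref{lemma:bionomial}, since that lemma requires a convex $f$; the composition of the convex-but-not-monotone $\kl$ in its first slot with the increasing convex exponential is not automatically convex, so I would confirm it by a direct second-derivative computation (or by noting $\kl(q|p)$ has nonnegative second derivative in $q$ and that $e^{c\,g(q)}$ is convex whenever $g$ is convex and nonnegative, using $\frac{d^2}{dq^2} e^{cg} = c\,e^{cg}(g'' + c(g')^2) \ge 0$). Everything else is a matter of bookkeeping: matching indices, confirming independence of the $mn$ terms, and identifying the empirical and true risks with the binomial proportions. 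I expect this verification to be the main, though modest, obstacle, while the overall reduction to the two cited lemmas is routine.
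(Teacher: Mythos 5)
Your proposal matches the paper's proof essentially verbatim: the same reduction of $mn\,\her(F)$ to a sum of independent $[0,1]$-valued loss variables, the same comparison to a binomial $Y\sim B(mn,\er(F))$ via \Cref{lemma:bionomial}, and the same application of \Cref{lemma:Maurer} with $t=mn$. Your one "care point" is in fact automatic: since $\exp$ is convex and \emph{increasing} and $\kl(\cdot\,|\,p)$ is convex in its first argument, the composition $x\mapsto e^{mn\,\kl(x/mn\,|\,\mu/mn)}$ is convex by the standard composition rule (no nonnegativity of $\kl$ or second-derivative computation is needed), which is exactly the step the paper passes over by asserting convexity of $x\mapsto mn\,\kl(x/mn\,|\,\er(F))$ and then applying \Cref{lemma:bionomial} to its exponential.
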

\begin{proof}
Let $f$ be the function $f(x) =  mn \kl(\frac{x}{mn}|\er(F))$, this function is convex. If we define $X_{i,j} = \ell_i({f_i, z_{i,k}})$, since $f_i$s are fixed, and samples are independent, the random variables $X_{i,j}$s are independent. Therefore, $\sum X_{i,j} = mn \her(F)$, and $\E[\sum X_{i,j}] = mn \er(F)$. Let $Y \sim B(mn, \er(F))$. Because of \Cref{lemma:bionomial} we have
\begin{align}
    \E[e^{f(mn \her(F))}] &\le \E[e^{f(Y)}],
\intertext{or equivalently,}
    \E[e^{(mn \; \kl(\her(F)|\er(F))}] &\le \E[e^{mn \; \kl(\frac{Y}{mn}|\er(F))}].
\intertext{Because of \Cref{lemma:Maurer}, we have:}
    \E[e^{mn \; \kl(\frac{Y}{mn}|\er(F)}] &\le 2\sqrt{mn}
\end{align}
Combining these two inequalities completes the proof.
\end{proof}

\newpage

\section{Experimental Details}\label{app:experiments} 
In this section, we provide the details of our experiments.~\footnote{Code: \href{https://github.com/hzakerinia/MTL}
{\url{https://github.com/hzakerinia/MTL}}} 

\subsection{Datasets}\label{sec:datasets}
We use six standard datasets that have occurred in the theoretical 
multi-task learning literature before. 

\myparagraph{MNIST Shuffled Pixels (SP):}~\citep{amit2018meta} each task is a random subset of the MNIST~\citep{mnist} 
dataset in which 200 of the input pixels are randomly shuffled. The same shuffling is consistent across all 
samples of that task.

\myparagraph{MNIST Permuted Labels (PL):}~\citep{amit2018meta} like MNIST-SP, but instead of shuffling pixels, the label ids of 
each task are randomly (but consistently) permuted.

\myparagraph{Folktables:}~\citep{ding2021retiring} A tabular dataset consisting of public US census 
information. From personal features, represented in a binary encoding, the model should predict if a person's 
income is above or below a threshold. 
Tasks correspond to different geographic regions.

\myparagraph{Multi-task dataset of product reviews (MTPR):}~\citep{pentina2017unlabeled} The data points 
are natural language product reviews, represented as vectorial sentence embeddings. The task 
is to predict if the sentiment of the review is positive or negative.
Each product forms a different task. 

\myparagraph{split-CIFAR10:} ~\citep{zhao2018federated} tasks are created by randomly choosing a subset of $3$ labels from the CIFAR10 dataset~\citep{cifar} and then sampling images corresponding to these classes.

\myparagraph{split-CIFAR100:}~\citep{zhao2018federated} like split-CIFAR10, but using label subsets of size $10$ and images from the CIFAR100 dataset~\citep{cifar}. 

\subsection{Model architectures}
For the MNIST experiments, we use convolutional networks used in \citet{amit2018meta}. 
For the vectorial dataset \emph{Product} and the tabular dataset \emph{Folktables}, we use 4-layer fully connected networks.
For the CIFAR experiments, we use two different networks: 1) The CNN used in \citet{scott2023pefll}, and a ViT model pretrained from ImageNet \citep{dosovitskiy2020vit}.
The details of the model architectures are provided in Table~\ref{tab:architecture}. 
 
The models' ambient dimensions (number of network weights) range from approximately 12000 
to approximately 5.5 million, \ie far more than the available number of samples per task.  
As random matrices $P$ for the single-task parametrization \eqref{eq:representation}, we 
use the Kronecker product projector of \citet{lotfi2022pac}, $P=Q_1\otimes Q_2/\sqrt{D}$, 
for $Q_1,Q_2\sim\mathcal{N}(0,1)^{\sqrt{D}\times\sqrt{d}}$. 
By this construction, the matrix $P\in\R^{D\times d}$ never has to be explicitly 
instantiated, which makes the memory and computational overhead tractable.
For the multi-task representation~\eqref{eq:mtlrepresentation2}, we use the 
analogous construction to form $Q' = [P_1, P_2, \dots, P_k]=Q'_1 \otimes Q'_2/\sqrt{D}\in\R^{D\times kl}$.

In this section, we provide experimental details to reproduce the results.

\subsection{Model training details}

All models are implemented in the PyTorch framework. We train them for 400 epochs with Adam optimizer, weight decay of 0.0005, and learning rate from \{0.1, 0.01, 0.001\}. 
The hyperparameter $l$ (the dimensionality of the random matrices which build $Q$) is chosen from values in \{20, 30, 40, 50, 60, 70, 80, 90, 100, 120, 150, 200, 300, 400, 500, 600, 700, 800, 900, 1000, 1200, 1400, 1600, 1800, 2000, 2500, 3000, 3500, 4000, 5000, 6000, 7000, 8000\}.
The hyperparameter $k$ (the number of basis vectors in $Q$) is chosen from values in \{5, 10, 15, 20, 30, 35, 40, 50, 60, 70, 80, 90\}.

We train all shared and task-specific parameters jointly. After the training is over, we first quantize the shared parameters, and after fixing them, we quantize each task-specific parameter separately. Quantization training is done with $30$ epochs using SGD with a learning rate of $0.0001$. For the codebook size, for the single-task learning, we choose it from \{2, 3, 5, 10, 15, 20, 30, 40\}. For multi-task we choose the codebook size for shared parameters ($r_G$) from \{10, 15, 20, 30\} and for encoding the joint task-specific vectors we use codebook size ($r_l$) from \{3, 10, 15, 20, 25, 30\}. For transfer learning, we use a 1-bit hyperparameter to decide if we want to learn a small new codebook for the new task or transfer the codebook from the multi-task learning stage. For each dataset, the hyperparameters are also encoded and considering in compute the length encoding, since we tune the hyperparameter in a data-dependent way. 

The detailed numeric values used for quantizing parameters in \Cref{tab:bound} are shown in \Cref{tab:bitlength} and \Cref{tab:bitlength2}.

\begin{table}[ht]
\centering
\caption{Numeric values contributing to the generalization bounds in Table~\ref{tab:bound} for $n$: number of tasks, $m$: average number of examples per task, $L$: number of classes. 
}
\label{tab:bitlength}
\begin{tabular}{cccccc}
 \toprule
 &{dataset} & MNIST SP   & MNIST PL & Folktables & Products\\
 & $n$\,/\,$m$\,/\,$L$ & 30\,/\,2000\,/\,10& 30\,/\,2000\,/\,10& 60\,/\,900\,/\,2& 60\,/\,2000\,/\,2 \\
 \midrule 
\multirow{3}{*}{{Single Task}}& $\her(f)$  &  \np{0.2337166667}&  \np{0.1938833333}& \np{0.2796666667}& \np{0.1598}\\ 
 &codebook size $r$ &  10&10&5&5\\ 
 &(average) encoding length  &  855.4&   854.3& 212.6& 216.4\\
 \midrule
 \multirow{5}{*}{{Multi-task}} & $\her(f_1,\dots,f_n)$ & \np{0.1013333333}&  \np{0.06601666667}& \np{0.2717407407}& \np{0.139025}\\
 &codebook sizes $r_g$ / $r_l$ &    10 / 3& 15 / 10& 10 / 10& 10 / 10\\
 & $l(E)$ &   2323 & 14887 & 1586 & 1192 \\
& $l_E(f_1,\dots,f_n)$ & 508 & 4796 & 686 & 1128 \\
&(average) encoding length & 94.4 & 651.1 & 37.9 & 38.7 \\
\bottomrule
\end{tabular}
\end{table}

\begin{table}[ht]
\centering
\caption{Numeric values contributing to the generalization bounds in Table~\ref{tab:bound}.}
\label{tab:bitlength2}
\begin{tabular}{cccccc}
 \toprule
 &{dataset} & \multicolumn{2}{c}{split-CIFAR10}   & \multicolumn{2}{c}{split-CIFAR100} \\ 
\cmidrule(r){2-6}
 & Model & CNN & ViT & CNN & ViT \\
  & $n$\,/\,$m$\,/\,$L$ & 100\,/\,453\,/\,3 & 30\,/\,1248\,/\,3 & 100\,/\,450\,/\,10 & 30\,/\,1250\,/\,10 \\
 \midrule 
\multirow{3}{*}{{Single Task}}& $\her(f)$ &  \np{0.3044591611} & \np{0.18130341880341871} & \np{0.6393333333} &  \np{0.3142400000000001}  \\ 
 &codebook size $r$ &  10& 10 & 10 & 10\\ 
 &(average) encoding length  & 542.5& 861.2 & 542.0 & 1842.9  \\
 \midrule
 \multirow{5}{*}{{Multi-task}} & $\her(f_1,\dots,f_n)$ [\%] & \np{0.3054304636}& \np{0.10579594017094018} & \np{0.6266888889} & \np{0.27426666666666677} \\
 &codebook sizes $r_g$ / $r_l$ &    10 / 20 & 10 / 20 & 20 / 10 & 20 / 30\\
 & $l(E)$ &  2358 & 3109 & 4209 & 11512 \\
& $l_E(f_1,\dots,f_n)$ & 4144 & 1747 & 3341 & 4957 \\
&(average) encoding length &  65.0 & 161.9 & 75.5 & 549.0\\
\bottomrule
\end{tabular}
\end{table}

\begin{table}[ht]
\caption{Model Architectures}
\label{tab:architecture}
    \centering
    \begin{tabular}{@{}cll@{}}
\toprule
\textbf{Datasets}       & \textbf{Layer}          & \textbf{Details}                                      \\ \midrule
\multirow{9}{*}{\parbox{2.5cm}{MNIST-SP/ \\MNIST-PL}}
                     & Conv1                  & Conv2d(input: $C$, output: $10$, kernel: $5\times5$)   \\
                     & Activation             & ELU                                                   \\
                     & Pooling                & MaxPool2d(kernel: $2\times2$)                         \\
                     & Conv2                  & Conv2d(input: $10$, output: $20$, kernel: $5\times5$) \\
                     & Activation             & ELU                                                   \\
                     & Pooling                & MaxPool2d(kernel: $2\times2$)                         \\
                     & Flatten                & -                                                     \\
                     & FC1                    & Linear(input: Conv Output, output: $50$)              \\
                     & FC\_out                & Linear(input: $50$, output: Output\_dim)              \\ \midrule
\multirow{6}{*}{\parbox{2.5cm}{Products/ \\Folktables}}
                     & FC1                    & Linear(input: input\_dim, output: $128$)              \\
                     & Activation             & ReLU                                                  \\
                     & FC2                    & Linear(input: $128$, output: $64$)                    \\
                     & Activation             & ReLU                                                  \\
                     & FC3                    & Linear(input: $64$, output: $32$)                     \\
                     & Output                 & Linear(input: $32$, output: Output\_dim)              \\ \midrule
\multirow{10}{*}{\parbox{2.5cm}{split-CIFAR10/ \\split-CIFAR100\\ (ConvNet)} }
                     & Conv1                  & Conv2d(input: $C$, output: $16$, kernel: $5\times5$)  \\
                     & Activation             & ReLU                                                  \\
                     & Pooling                & MaxPool2d(kernel: $2\times2$)                         \\
                     & Conv2                  & Conv2d(input: $16$, output: $32$, kernel: $5\times5$) \\
                     & Activation             & ReLU                                                  \\
                     & Pooling                & MaxPool2d(kernel: $2\times2$)                         \\
                     & Flatten                & -                                                     \\
                     & FC1                    & Linear(input: $800$, output: $120$)                   \\
                     & FC2                    & Linear(input: $120$, output: $84$)                    \\
                     & FC3                    & Linear(input: $84$, output: Output\_dim)              \\ \midrule
\multirow{12}{*}{\parbox{2.5cm}{split-CIFAR10/ \\split-CIFAR100 \\(ViT)}}
                    & Patch Embedding         & Conv2d(input: $3$, output: $192$, kernel: $16 \times 16$, stride: $16$) \\
                    \cmidrule(lr){2-3}
                    & \multirow{9}{*}{12 Transformer Blocks} &
                                            \begin{tabular}[t]{@{}l@{}}
                                            LayerNorm(shape: $192$, eps: $1\text{e}{-6}$) \\
                                            Attention: \\
                                            \quad Linear(input: $192$, output: $576$) for qkv \\
                                            \quad Linear(input: $192$, output: $192$) for projection \\
                                            MLP: \\
                                            \quad LayerNorm(shape: $192$, eps: $1\text{e}{-6}$) \\
                                            \quad Linear(input: $192$, output: $768$) \\
                                            \quad Activation: GELU \\
                                            \quad Linear(input: $768$, output: $192$) \\
                                            \end{tabular} \\
                    \cmidrule(lr){2-3}
                    & Post-Norm             & LayerNorm(shape: $192$, eps: $1\text{e}{-6}$) \\
                    & Classification Head   & Linear(input: $192$, output: Output\_dim) \\
                    \bottomrule
\end{tabular}
\end{table}

\end{document}